\definecolor{mydarkblue}{rgb}{0,0.3,0.5}
\newtheorem{theorem}{Theorem}
\newtheorem{lemma}{Lemma}
\newtheorem{constraint}{Constraint}
\DeclareMathAlphabet{\mathbsf}{OT1}{cmss}{bx}{n}
\DeclareMathAlphabet{\mathssf}{OT1}{cmss}{m}{sl}
\crefname{lemma}{lemma}{lemmas}
\Crefname{lemma}{Lemma}{Lemmas}
\crefname{thm}{theorem}{theorems}
\Crefname{thm}{Theorem}{Theorems}
\crefname{prop}{proposition}{propositions}
\Crefname{prop}{Proposition}{Propositions}
\crefname{defn}{definition}{definitions}
\Crefname{defn}{Definition}{Definitions}
\crefname{equation}{equation}{equations}
\Crefname{equation}{Equation}{Equations}
\Crefname{section}{Section}{Sections}
\Crefname{appendix}{Appendix}{Appendices}
\crefname{figure}{figure}{figures}
\Crefname{figure}{Figure}{Figures}
\crefname{algorithm}{algorithm}{algorithms}
\Crefname{algorithm}{Algorithm}{Algorithms}
\crefname{assumption}{assumption}{assumptions}
\Crefname{assumption}{Assumption}{Assumptions}
\DeclareMathOperator*{\argmin}{arg\,min}
\newtheoremstyle{named}{}{}{\itshape}{}{\bfseries}{.}{.5em}{\thmnote{#3's }#1}
\theoremstyle{named}
\newtheorem*{namedtheorem}{Theorem}
\newtheorem*{namedlemma}{Lemma}
\newtheorem*{theorem-non}{Theorem}
\title{\textbf{A Closer Look at In-Context Learning under Distribution Shifts}}
\date{}
\author{
  Kartik Ahuja\footnote{Correspondance to kartikahuja@meta.com} \\
  FAIR (Meta AI)\\
  \and
  David Lopez-Paz\\
 FAIR (Meta AI) \\
}
\begin{document}
\maketitle

\begin{abstract}
  In-context learning, a capability that enables a model to learn from input examples on the fly without necessitating weight updates, is a defining characteristic of large language models. In this work, we follow the setting proposed in \citep{garg2022can} to better understand the generality and limitations of in-context learning from the lens of the simple yet fundamental task of linear regression. The key question we aim to address is: Are transformers more adept than some natural and simpler architectures at performing in-context learning under varying distribution shifts? To compare transformers, we propose to use a simple architecture based on set-based Multi-Layer Perceptrons (MLPs). We find that both transformers and set-based MLPs exhibit in-context learning under in-distribution evaluations, but transformers more closely emulate the performance of ordinary least squares (OLS). Transformers also display better resilience to mild distribution shifts, where set-based MLPs falter. However, under severe distribution shifts, both models' in-context learning abilities diminish.
\end{abstract}

\section{Introduction} 

Transformers \citep{vaswani2017attention} form the backbone of modern large language models (LLMs) including the likes of GPT-3 \citep{brown2020language} and GPT-4 \citep{openai2023gpt}. These LLMs demonstrate remarkable capabilities, such as in-context learning and natural language-based algorithmic reasoning. However, we are only beginning to understand the origins, limitations, and generality of these capabilities, which is essential for developing safe and reliable LLMs.

In-context learning (ICL) refers to a model's capability to acquire knowledge on the fly from examples provided at test time without requiring any weight updates. This ability is especially useful when the model has to adapt to new tasks from a few demonstrations in the test prompt, for example, adapting a model to drive in a new region with few demonstrations. Understanding ICL for LLMs such as GPT-3 trained on raw text data is particularly challenging.  In \cite{garg2022can}, the authors propose an
insightful  training setup, which abstracts away the raw nature of text data. In their work, transformer models from GPT-2 family are trained on prompts comprising of input, label demonstrations and shown to emulate the ordinary least squares (OLS) algorithm.  Certain natural questions arise at this point. What specifics of the transformer are responsible for the emergence of this behvavior? Can simpler architectures exhibit the same capabilities? How resilient is ICL to distribution shifts?  These are the questions that motivate our work.

To compare with transformers, we propose a natural baseline that is  based on set-based MLPs \cite{zaheer2017deep,lopez2017discovering} that exploit the permutation-invariant nature of the task. Depending on the distribution of test prompts, we categorize in-context learning into in-distribution ICL (ID-ICL) and out-of-distribution ICL  (OOD-ICL). Under ID-ICL, the train distribution of the prompt is identical to the test distribution of the prompt. Under OOD-ICL, the test distribution of prompt sequence is different from the train distribution. When evaluating OOD-ICL, we are particularly interested in the case when the test distribution of prompts is centered on the tail of the train distribution of prompts.  We summarize our key contributions below.

\begin{itemize}
\item First, we derive conditions under which the the optimal model that predicts the label for the current query based on the prompt coincide with the OLS or ridge regression. These are based on known arguments, yet it is important to provide them for completeness. 
\item Despite set-based MLPs being particularly suited for this permutation-invariant task, we find that transformers (GPT-2 family) exhibit better ID-ICL abilities. 
\item Under mild distribution shifts, we find that transformers degrade more gracefully than set-based MLPs. Under more severe distribution shifts, both transformers and set-based MLPs do not exhibit ICL abilities.
\item  ID-ICL performance is not predictive of OOD-ICL performance for both architecture choices. 
\end{itemize}

Moving forward, several questions need to be answered. Why are transformers better than set-based MLPs at ICL?  How can we improve the OOD-ICL abilities of these architectures?

\section{Related Works}

Recent studies have offered intriguing insights into in-context learning (ICL). \citet{olsson2022context} propose that the formation of ``induction heads'', which allow models to copy in-context information, is key to ICL. Building on \cite{garg2022can}'s work, several researchers \cite{akyurek2022learning, von2022transformers, dai2022can} demonstrated that transformer model's ability to implicitly execute gradient descent steps during inference could also be central to ICL, supporting their claims with empirical evidence. \citet{litransformers} explore this setup further by analyzing generalization bounds for the learnability of algorithms. Lastly, \citet{xie2021explanation} focus on data sampled from hidden Markov model and interpret in-context learning through the lens of implicit Bayesian inference. They go on to provide conditions under which models can perform ICL even when prompts have low probability under the training distribution.

\citet{chan2022transformers} studied the impact of inductive bias of pretraining the model on ICL. The authors showed that pretrained transformers exhibit rule-based generalization, while those trained from scratch use exemplar-based generalization, i.e., leverage information from the examples provided in-context to carry out ICL. \citet{kirsch2022general} find that among factors determining the inductive bias of the model, state-size is a more crucial parameter than the model size for ICL abilities. More recently,  \citet{wei2023larger} showed that model size can be a crucial parameter as well. In particular, they show that sufficiently large models such as PaLM-540B  are capable of overriding semantic priors if needed, while smaller counterparts are unable to do so.

\section{In-context Learning under Distribution Shifts}

We start with some standard notation. Inputs and labels are denoted as $x \in \mathbb{R}^{d}$ and  $y\in \mathbb{R}$ respectively. Each prompt $p$ is a sequence of independent and identically distributed (i.i.d.) input, label pairs, denoted as $p=\{(x_i,y_i)\}_{i=1}^{k}$. Each prompt $p$ is sampled independently as follows

\begin{equation}
\begin{split}
 &   f \sim \mathbb{P}_{f}, \\
 &   x_i \sim \mathbb{P}_{x}, \varepsilon_i \in \mathbb{P}_{\varepsilon}, \forall i \in \{1, \cdots, k\},\\
 &   y_i \leftarrow  f(x_i) + \varepsilon_i, \forall i \in \{1, \cdots, k\},\\
\end{split}
\label{eqn1}
\end{equation}

where the labeling function $f$, which is fixed for the entire prompt $p$, is sampled from a distribution $\mathbb{P}_{f}$,  inputs $x_i$ are sampled independently from $\mathbb{P}_x$, $y_i$ is generated by adding some noise $\varepsilon_i$ to the labeling function's output $f(x_i)$. For the prompt $p$, we define its prefix as $p_{j} = ((x_1,y_1),(x_2,y_2), \cdots, x_j)$, where $j \in\{1, \cdots, k\}$. Define the support of prefix $p_j$ as $\mathcal{P}_j$. 
 
Define the risk for model $M$ as $R(M) = \sum_{j=1}^{k}\mathbb{E}\big[\ell\big(M(p_{j}), y_{j}\big)\big]$, where $\ell$ is the loss, $M(p_{j})$ looks at the prefixes $p_j$ and makes the prediction, the loss  is computed w.r.t the true label $y_j$, $\mathbb{E}[\cdot]$ is the expectation over the joint distribution of $(p_j,y_j)$. We want to find a model that minimizes the risk $R(M)$
 i.e., 

\begin{equation}M^{*} \in \argmin_{M} R(M) \label{eqn2} \end{equation}

For the results to follow, we make some standard regularity assumptions that we state as follows. The probability measure associated with  $p_j$ is absolutely continuous w.r.t Lebesgue measure.  The conditional expectation and variance exists, i.e., $|\mathbb{E}[y_j|p_j]|<\infty$ and $\mathsf{Var}[y_j|p_j]<\infty$ for all $p_j \in \mathcal{P}_j$.

\begin{lemma} 
\label{lemma1}
If $\ell$ is the square loss, then the solution to equation \eqref{eqn2} satisfies, $M^{*}(p_j) = \mathbb{E}[y_j|p_j], \text{almost everywhere in } \mathcal{P}_j$, $\forall j \in \{1, \cdots, k\}$. 
\end{lemma}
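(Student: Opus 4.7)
The plan is to invoke the classical decision-theoretic fact that the conditional expectation is the $L^2$-minimizer, applied term by term to the additive risk.

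First I would observe that the risk in \eqref{eqn2} is an additive sum $R(M) = \sum_{j=1}^{k} \mathbb{E}\bigl[(M(p_j) - y_j)^2\bigr]$, and the predictions $M(p_j)$ for different prefix lengths $j$ involve disjoint arguments, so one may minimize each summand independently over the class of measurable functions of $p_j$. This reduces the problem to showing, for each fixed $j$, that $\mathbb{E}\bigl[(g(p_j) - y_j)^2\bigr]$ is minimized over measurable $g$ by $g^*(p_j) = \mathbb{E}[y_j \mid p_j]$.

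Next, for fixed $j$, I would carry out the standard bias–variance decomposition. Writing $\mu(p_j) := \mathbb{E}[y_j \mid p_j]$ and adding and subtracting $\mu(p_j)$ inside the square, one obtains
\begin{equation*}
\mathbb{E}\bigl[(g(p_j) - y_j)^2\bigr] = \mathbb{E}\bigl[(g(p_j) - \mu(p_j))^2\bigr] + \mathbb{E}\bigl[(\mu(p_j) - y_j)^2\bigr] + 2\,\mathbb{E}\bigl[(g(p_j) - \mu(p_j))(\mu(p_j) - y_j)\bigr].
\end{equation*}
The cross term vanishes by the tower property: conditioning on $p_j$, the factor $g(p_j) - \mu(p_j)$ is $p_j$-measurable and $\mathbb{E}[\mu(p_j) - y_j \mid p_j] = 0$. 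The assumption $\mathsf{Var}[y_j \mid p_j] < \infty$ ensures that $\mathbb{E}[(\mu(p_j) - y_j)^2] = \mathbb{E}[\mathsf{Var}(y_j \mid p_j)] < \infty$ and hence the decomposition and the tower step are justified (Fubini/Cauchy–Schwarz-type integrability checks).

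The second term on the right does not depend on $g$, so the minimization reduces to $\min_g \mathbb{E}\bigl[(g(p_j) - \mu(p_j))^2\bigr]$, which is nonnegative and equals zero iff $g(p_j) = \mu(p_j)$ in $L^2$, i.e., almost everywhere with respect to the law of $p_j$. Because this law is absolutely continuous with respect to Lebesgue measure on $\mathcal{P}_j$ by hypothesis, this is equivalent to $M^*(p_j) = \mathbb{E}[y_j \mid p_j]$ Lebesgue-almost everywhere on $\mathcal{P}_j$. I do not anticipate a real obstacle here; the only mild subtlety is the measurability and integrability bookkeeping that justifies the tower-property step, and both are immediate from the finiteness assumptions on $\mathbb{E}[y_j \mid p_j]$ and $\mathsf{Var}[y_j \mid p_j]$ stated before the lemma.
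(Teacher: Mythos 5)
Your proposal is correct and follows essentially the same route as the paper: the same bias--variance decomposition with the cross term killed by the tower property, the same term-by-term reduction of the additive risk, and the same conclusion that the squared gap $\mathbb{E}\bigl[(M^*(p_j)-\mathbb{E}[y_j|p_j])^2\bigr]=0$ forces equality almost everywhere. The only cosmetic difference is that the paper justifies the term-by-term minimization via a contradiction argument (a minimizer failing to minimize some $R_j$ could be strictly improved), whereas you note directly that the summands decouple because predictions at different prefix lengths can be chosen independently; both are fine.
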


While the above lemma is stated for square loss, an equivalent statement holds for cross-entropy loss. We now turn to our case study, i.e., linear labeling functions $f$. Each prompt $p$ is sampled as follows

\begin{equation}
\begin{split}
&    \beta \sim \mathcal{N}(0, \Sigma), \text{where}\; \Sigma \in \mathbb{R}^{d\times d}\; \text{is invertible}  \\ 
& x_i \sim \mathbb{P}_{x}, \varepsilon_i \sim \mathcal{N}(0,\sigma^2),  \forall i \in \{1, \cdots, k\} \\ 
&    y_i \leftarrow \beta^{\top}x_i + \varepsilon_i,   \forall i \in \{1, \cdots, k\} 
\end{split}
\label{eqn:lin_prompt}
\end{equation}
where $\beta$ is drawn from a normal distribution with mean zero and covariance $\Sigma$ and noise $\varepsilon_i$ is sampled from a normal distribution with mean zero and variance $\sigma^2$. We break down prefix $p_j$ into a matrix $\boldsymbol{X}_j \in \mathbb{R}^{(j-1)\times d}$ and vector $\boldsymbol{y}_j \in \mathbb{R}^{j-1}$ that stacks the first $j-1$ $x_i$'s and $y_i$'s observed in the prompt up to query $x_j$.  The tuple $(\boldsymbol{X}_j,\boldsymbol{y}_j, x_j)$ captures all the relevant information from $p_j$ for predicting $y_j$. Since $p_1$ has no inputs to look at in the past, we set 
$\boldsymbol{X}_1$, $\boldsymbol{y}_1$ to zero. To better understand the notation, consider the following example, $p = \{(x_1,y_1), (x_2,y_2), (x_3,y_3)\}$. Prefix $p_3= \{(x_1,y_1), (x_2,y_2), x_3\}$, $\boldsymbol{X}_3 = \begin{bmatrix} x_1 \\
x_2 \end{bmatrix} $, $\boldsymbol{y}_3 = \begin{bmatrix} y_1 \\ y_2\end{bmatrix}$. Next, we derive the optimal models $M^{*}(p_j)$ for the data distribution in equation \eqref{eqn:lin_prompt}.  The theorems derived below follows from standard results on linear regression (See \citet{dicker2016ridge, richards2021asymptotics}). We still state and derive these for completeness. 

\begin{theorem}
\label{thm1}
If $\ell$ is the square loss and prompt generation follows equation \eqref{eqn:lin_prompt}, then the optimal model from equation \eqref{eqn2} satisfies, 
$$M^{*}(p_j) = x_j^{\top}(\boldsymbol{X}_j^{\top}\boldsymbol{X}_j + \sigma^2\Sigma^{-1} )^{-1}\boldsymbol{X}_{j}^{\top} \boldsymbol{y}_j$$
almost everywhere in $\mathcal{P}_j, \forall j \in \{1, \cdots, k\}$. 
\end{theorem}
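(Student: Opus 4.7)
My plan is to invoke \cref{lemma1} to identify $M^{*}(p_j)$ with the conditional mean $\mathbb{E}[y_j \mid p_j]$, and then reduce this conditional mean to a standard Gaussian–Gaussian posterior computation for $\beta$. Substituting $y_j = \beta^\top x_j + \varepsilon_j$ and using that $\varepsilon_j$ is mean-zero and independent of $\beta$ and all inputs, I first write
\[
\mathbb{E}[y_j \mid p_j] \;=\; x_j^\top\, \mathbb{E}\!\left[\beta \,\middle|\, \boldsymbol{X}_j,\boldsymbol{y}_j,x_j\right].
\]
Because the $x_i$'s are drawn i.i.d.\ from $\mathbb{P}_x$ independently of $\beta$, the query $x_j$ is independent of $(\beta,\boldsymbol{X}_j,\boldsymbol{y}_j)$, so conditioning on $x_j$ carries no extra information about $\beta$ and the posterior reduces to $\mathbb{E}[\beta \mid \boldsymbol{X}_j,\boldsymbol{y}_j]$.

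Next I would compute this posterior mean explicitly. Conditionally on $\boldsymbol{X}_j$, we have a textbook Gaussian linear model with prior $\beta \sim \mathcal{N}(0,\Sigma)$ and likelihood $\boldsymbol{y}_j \mid \beta,\boldsymbol{X}_j \sim \mathcal{N}(\boldsymbol{X}_j\beta,\,\sigma^2 I_{j-1})$. Completing the square in the joint log-density in $\beta$ (equivalently, using the standard conjugate-Gaussian update) gives
\[
\beta \mid \boldsymbol{X}_j,\boldsymbol{y}_j \;\sim\; \mathcal{N}\!\left( (\boldsymbol{X}_j^\top \boldsymbol{X}_j + \sigma^2 \Sigma^{-1})^{-1}\boldsymbol{X}_j^\top \boldsymbol{y}_j,\; \sigma^2(\boldsymbol{X}_j^\top \boldsymbol{X}_j + \sigma^2 \Sigma^{-1})^{-1}\right).
\]
Invertibility of $\boldsymbol{X}_j^\top \boldsymbol{X}_j + \sigma^2 \Sigma^{-1}$ follows from $\Sigma$ being positive definite together with $\sigma^2 > 0$, since $\sigma^2 \Sigma^{-1}$ is then positive definite and $\boldsymbol{X}_j^\top \boldsymbol{X}_j$ is positive semidefinite. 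Left-multiplying the posterior mean by $x_j^\top$ produces the stated formula.

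Finally, I would address the boundary case $j=1$ separately: by the convention $\boldsymbol{X}_1=0,\ \boldsymbol{y}_1=0$, the formula evaluates to $0$, which matches $\mathbb{E}[y_1 \mid x_1] = x_1^\top \mathbb{E}[\beta] = 0$ directly from the prior (using independence of $x_1$ and $\beta$ and $\mathbb{E}[\beta]=0$). I do not anticipate a real obstacle; the argument is elementary conjugate-Gaussian algebra plus an independence observation. The only care points are (i) justifying cleanly that $x_j$ drops out of the conditioning so that the problem becomes the standard Bayesian linear regression, and (ii) arranging the linear-algebraic form so that it matches the stated expression exactly rather than an equivalent Woodbury-type rearrangement.
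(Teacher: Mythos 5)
Your proposal is correct and follows essentially the same route as the paper: apply \cref{lemma1} to reduce to $\mathbb{E}[y_j\mid p_j]$, observe that $x_j$ is uninformative about $\beta$, and compute the conjugate-Gaussian posterior mean $(\boldsymbol{X}_j^{\top}\boldsymbol{X}_j+\sigma^2\Sigma^{-1})^{-1}\boldsymbol{X}_j^{\top}\boldsymbol{y}_j$. Your added remarks on invertibility and the $j=1$ convention are minor refinements the paper omits but do not change the argument.
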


If $\Sigma$ is identity, then the above solution coincides with ridge regression \citep{hoerl1970ridge} using $\sigma^2$ as the ridge penalty. We now study the noiseless setting. To analyze the noiseless case, we will look at the ridge solutions in the limit of $\sigma$ going to zero. 

\begin{theorem}
\label{thm2}
If $\ell$ is the  square loss and prompt generation follows equation \eqref{eqn:lin_prompt} with $\Sigma$ as identity, \footnote{If $\Sigma$ is not identity, then the limit may or may not coincide with OLS; see the Appendix for further discussion.} then  in the limit of $\sigma \rightarrow 0 $ the optimal model from equation \eqref{eqn2} satisfies
$$M^{*}(p_j) = x_j^{\top}\boldsymbol{X}_{j}^{+}\boldsymbol{y}_j$$
almost everywhere in $\mathcal{P}_j, \forall j \in \{1, \cdots, k\}$, where $\boldsymbol{X}_{j}^{+}$ is the Moore-Penrose pseudo-inverse of $\boldsymbol{X}_j$.
\end{theorem}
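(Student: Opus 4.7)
The plan is to take the ridge-style formula from Theorem~\ref{thm1}, specialize to $\Sigma = I$, and compute the $\sigma \to 0$ limit in closed form. For every $\sigma > 0$, Theorem~\ref{thm1} with $\Sigma = I$ yields
$$M^{*}_{\sigma}(p_j) = x_j^{\top}(\boldsymbol{X}_j^{\top}\boldsymbol{X}_j + \sigma^2 I)^{-1}\boldsymbol{X}_j^{\top}\boldsymbol{y}_j$$
almost everywhere in $\mathcal{P}_j$. Since $x_j$ and $\boldsymbol{y}_j$ enter the formula only as the observed prefix data, the conclusion of Theorem~\ref{thm2} reduces to the purely linear-algebraic claim that $\lim_{\sigma\to 0}(\boldsymbol{X}_j^{\top}\boldsymbol{X}_j + \sigma^2 I)^{-1}\boldsymbol{X}_j^{\top} = \boldsymbol{X}_j^{+}$, after which I would left-multiply by $x_j^{\top}$ and right-multiply by $\boldsymbol{y}_j$ to obtain the stated formula.

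To establish the matrix limit, I would use the singular value decomposition $\boldsymbol{X}_j = UDV^{\top}$, where $D$ collects singular values $s_1,\ldots,s_r$ on its diagonal (with zeros padding the remaining entries). A direct substitution yields
$$(\boldsymbol{X}_j^{\top}\boldsymbol{X}_j + \sigma^2 I)^{-1}\boldsymbol{X}_j^{\top} = V\,\mathrm{diag}\!\bigl(s_i/(s_i^2+\sigma^2)\bigr)\,U^{\top},$$
where the entry associated with any zero singular value evaluates to $0/(0+\sigma^2)=0$. As $\sigma\to 0$, the coefficient $s_i/(s_i^2+\sigma^2)$ converges to $1/s_i$ whenever $s_i>0$ and remains $0$ whenever $s_i=0$. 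The limiting matrix is therefore $VD^{+}U^{\top}$, which is the textbook SVD expression for the Moore--Penrose pseudo-inverse $\boldsymbol{X}_j^{+}$.

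The only delicate step is the ``almost everywhere'' qualifier. The SVD argument goes through cleanly as soon as $\boldsymbol{X}_j$ has its generic rank $\min(j-1,d)$; I would invoke the assumed absolute continuity of $\mathbb{P}_x$ with respect to Lebesgue measure to conclude that the rows of $\boldsymbol{X}_j$ are in general position on a set of full measure, so the rank condition, and hence the scalar limits above, hold outside a null set that is absorbed into the ``almost everywhere'' conclusion. I do not anticipate any substantive obstacle beyond this bookkeeping: the ridge-to-pseudo-inverse limit is a classical identity, and continuity of the right-hand side in $\sigma$ on the generic stratum makes the passage from $M^{*}_{\sigma}$ to its $\sigma\to 0$ limit immediate.
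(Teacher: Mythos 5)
Your proposal is correct and follows essentially the same route as the paper: specialize Theorem~\ref{thm1} to $\Sigma = I$ and pass to the limit $\lim_{\sigma\to 0}(\boldsymbol{X}_j^{\top}\boldsymbol{X}_j + \sigma^2 I)^{-1}\boldsymbol{X}_j^{\top} = \boldsymbol{X}_j^{+}$, which the paper simply cites as the standard limit characterization of the pseudo-inverse while you supply the SVD computation. One small remark: your worry about needing $\boldsymbol{X}_j$ to have generic rank is unnecessary, since your own SVD calculation shows the limit holds for every matrix (zero singular values contribute $0$ throughout), so the ``almost everywhere'' qualifier is inherited entirely from Theorem~\ref{thm1} and requires no extra measure-theoretic bookkeeping.
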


In the above results (Lemma \ref{lemma1}, Theorem \ref{thm1}, and Theorem \ref{thm2}) we do not use the fact that inputs $x_i$'s are drawn independently.  In Theorem \ref{thm1}, and Theorem \ref{thm2}, we assumed that $\beta$ is drawn from a normal distribution.  For distributions beyond normal, we now argue that if we restrict the search space of models, then the same results continue to hold.

\begin{constraint}
\label{assm: model}
    $M(p_j) = x_j^{\top}m(\boldsymbol{X}_j)\boldsymbol{y}_j$.
\end{constraint}

The above  constraint restricts the model to be linear in test query and also to be linear in the label seen up to that point. We do not impose any restrictions on $m(\cdot)$. In the absence of this constraint, the risk $R(M)$ depends on moments beyond the second order moments of the distribution of $\beta$. Thus the optimal model in the absence of this constraint may not coincide with OLS or ridge regression.

\begin{theorem}
\label{thm3}
    Suppose $\ell$ is the square loss, $\beta$'s and $x_i$'s are drawn from an arbitrary distribution with a finite mean and invertible covariance, rest of the prompt generation follows equation \eqref{eqn:lin_prompt}. In this setting, the solution to equation \eqref{eqn2} under Constraint \ref{assm: model} satisfies $$M^{*}(p_j) = x_j^{\top}(\boldsymbol{X}_j^{\top}\boldsymbol{X}_j + \sigma^2\Sigma^{-1} )^{-1}\boldsymbol{X}_{j}^{\top} \boldsymbol{y}_j$$
almost everywhere in $\mathcal{P}_j$, $\forall j \in \{1, \cdots, k\}$. 
\end{theorem}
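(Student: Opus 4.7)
Since $\boldsymbol{X}_j$ has a different shape for each $j$, the minimization of $R(M) = \sum_j \mathbb{E}[(y_j - M(p_j))^2]$ decouples across $j$. Fix $j$, write $A := m(\boldsymbol{X}_j)$, and substitute $y_j = x_j^\top \beta + \varepsilon_j$ together with $\boldsymbol{y}_j = \boldsymbol{X}_j \beta + \boldsymbol{\varepsilon}$ into the constrained model to obtain the residual decomposition
\begin{equation*}
y_j - x_j^\top A \boldsymbol{y}_j \;=\; x_j^\top(I - A \boldsymbol{X}_j)\beta \;+\; \bigl(\varepsilon_j - x_j^\top A \boldsymbol{\varepsilon}\bigr).
\end{equation*}
Conditioning on $\boldsymbol{X}_j$ and using mutual independence of $\beta$, $\{\varepsilon_i\}$, and $x_j$, together with mean-zero of $\beta$ and the noise, every cross product between the $\beta$-term and the noise-term has zero conditional mean.

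Taking conditional expectation and letting $C_x := \mathbb{E}[x_i x_i^\top]$ and $\Sigma := \mathbb{E}[\beta\beta^\top]$, the two surviving quadratic forms give
\begin{equation*}
\mathbb{E}\bigl[(y_j - x_j^\top A \boldsymbol{y}_j)^2 \mid \boldsymbol{X}_j\bigr]
= \mathrm{tr}\!\bigl((I - A\boldsymbol{X}_j)\,\Sigma\,(I - A\boldsymbol{X}_j)^\top C_x\bigr) \;+\; \sigma^2\bigl(1 + \mathrm{tr}(A A^\top C_x)\bigr).
\end{equation*}
The critical observation is that this expression depends on the laws of $\beta$ and $x_i$ \emph{only through their second moments} $\Sigma$ and $C_x$. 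Thus under Constraint~\ref{assm: model} the optimization is identical to the one implicitly solved in the Gaussian setting of Theorem~\ref{thm1}, which is exactly why the same formula must come out. This is also the reason the constraint is essential: without linearity in $\boldsymbol{y}_j$, higher moments of $\beta$ would enter and the optimal $M$ could depart from ridge regression.

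Differentiating in $A$ and setting the gradient to zero yields the normal equation
\begin{equation*}
C_x\bigl(A\,(\boldsymbol{X}_j \Sigma \boldsymbol{X}_j^\top + \sigma^2 I) - \Sigma \boldsymbol{X}_j^\top\bigr) = 0.
\end{equation*}
Since $\mathrm{Cov}(x_i)$ is invertible by assumption, so is $C_x = \mathrm{Cov}(x_i) + \mathbb{E}[x_i]\mathbb{E}[x_i]^\top$, and factoring it out gives $A^\ast = \Sigma \boldsymbol{X}_j^\top (\boldsymbol{X}_j \Sigma \boldsymbol{X}_j^\top + \sigma^2 I)^{-1}$. A direct push-through identity---immediate from $(\boldsymbol{X}_j^\top \boldsymbol{X}_j + \sigma^2 \Sigma^{-1}) \Sigma \boldsymbol{X}_j^\top = \boldsymbol{X}_j^\top (\boldsymbol{X}_j \Sigma \boldsymbol{X}_j^\top + \sigma^2 I)$---rewrites $A^\ast$ as $(\boldsymbol{X}_j^\top \boldsymbol{X}_j + \sigma^2 \Sigma^{-1})^{-1} \boldsymbol{X}_j^\top$, matching the claimed form.

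The main obstacle is conceptual rather than computational: one must spot that Constraint~\ref{assm: model} reduces the optimization to one that only sees second-order information about $\beta$ and $x_i$, forcing the optimum to coincide with the Gaussian case of Theorem~\ref{thm1}. The rest---computing a trace-quadratic expectation, differentiating it, factoring out the invertible $C_x$, and the Woodbury-type rearrangement---is standard matrix calculus.
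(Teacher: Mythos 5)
Your proof is correct, and it shares the paper's central observation --- that under Constraint~\ref{assm: model} the risk depends on the law of $\beta$ (and of $x_j$) only through second moments --- but it finishes differently. The paper stops at that observation, swaps in a Gaussian prior with matching moments, and then invokes Theorem~\ref{thm1} (the Bayesian posterior-mean computation) to read off the answer. You instead condition on $\boldsymbol{X}_j$, minimize the resulting quadratic in $A=m(\boldsymbol{X}_j)$ directly via the normal equation $C_x\bigl(A(\boldsymbol{X}_j\Sigma\boldsymbol{X}_j^{\top}+\sigma^2 I)-\Sigma\boldsymbol{X}_j^{\top}\bigr)=0$, and convert to the stated form with the push-through identity. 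Your route is more self-contained (no appeal to the Gaussian posterior at all) and it makes the role of the invertibility of $\mathrm{Cov}(x_i)$ explicit, which the paper leaves implicit. Two small things to tighten: state that the conditional objective is a convex quadratic in $A$, so the stationary point is the global minimizer, and note that because $m$ is an arbitrary function of $\boldsymbol{X}_j$ you may minimize pointwise for (almost) every $\boldsymbol{X}_j$, which is what yields the ``almost everywhere'' conclusion. Finally, both you and the paper implicitly take $\mathbb{E}[\beta]=0$ (you write $\Sigma=\mathbb{E}[\beta\beta^{\top}]$, the paper invokes Theorem~\ref{thm1}, which is proved for a zero-mean prior); for a genuinely nonzero-mean $\beta$ the optimal $A$ would involve the second moment rather than the covariance, so this is a shared imprecision with the theorem statement rather than a defect of your argument.
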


So far, we have characterized different conditions under which the optimal model emulates the OLS or the ridge regression on the support of training distribution of the prompts. The study by \citet{garg2022can} demonstrated that transformers, when trained with sufficient data, can emulate OLS regression. Theorem \ref{thm1}, \ref{thm2} suggest that sufficiently high capacity models (that can handle input data of varying lengths)  trained on sufficient amount of data should behave as well as transformers on the prompts sampled from the same distribution as the train distribution. We test this hypothesis in the experiments section.  Outside the support of the training distribution of prompts, performance is not guaranteed to be good, and it depends on the inductive biases -- architecture,  optimizer, and the loss function. Our experiments will examine the bias from the architecture.
We now propose a natural architecture for the task in question.

\paragraph{A natural baseline for the above task}
We revisit the data generation in equation \eqref{eqn1} and parametrize the labeling function. Say the labeling process now is $y_i \leftarrow f(x_i, \beta) + \varepsilon_i$, where $\beta$ is sampled from some distribution. $\mathbb{E}[y_i|x_i, \beta] = f(x_i, \beta)$. Our model will first estimate $\beta$ from the given set of samples $\boldsymbol{X}_j, \boldsymbol{y}_j$.  The estimation of $\beta$ does not depend on the order of inputs and thus estimation should be invariant w.r.t. to the order of inputs. Further, we want to work with architectures that are capable of handling inputs of variable length. For this purpose, the most natural architecture are the ones that accept sets as inputs. We revisit the Theorem 2 in \cite{zaheer2017deep}. The theorem states 

\begin{namedtheorem}{\cite{zaheer2017deep}}
    A function operating on a set $\mathcal{A}$ having elements from a countable universe is a
valid set function iff it can be expressed as $\rho \big(\sum_{a_i\in \mathcal{A}} \phi(a_i)\big)$.
\end{namedtheorem}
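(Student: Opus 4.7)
The statement has two directions; the ``if'' direction is essentially a one-line observation, while the ``only if'' direction is where the content lies.

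For the \emph{if} direction, suppose $M(\mathcal{A}) = \rho\!\bigl(\sum_{a_i\in\mathcal{A}} \phi(a_i)\bigr)$. Because real addition is commutative and associative, the inner sum depends only on the multiset of summands and is in particular invariant under any reordering of $\mathcal{A}$. Thus $M$ defines a function on the collection of (finite) subsets rather than on ordered tuples, which is exactly what is meant by a valid set function. I would dispatch this in a single sentence.

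For the \emph{only if} direction, let $f$ be an arbitrary permutation-invariant function on finite subsets of a countable universe $\mathcal{U}$. The plan is to construct $\phi$ so that the aggregation map $\mathcal{A}\mapsto \sigma(\mathcal{A}) := \sum_{a\in\mathcal{A}} \phi(a)$ is \textbf{injective} on the collection of finite subsets of $\mathcal{U}$; once this is in hand, we can simply set $\rho(s) := f\!\bigl(\sigma^{-1}(s)\bigr)$ on the image of $\sigma$ and extend $\rho$ arbitrarily outside the image. By injectivity of $\sigma$, this definition is unambiguous and produces $\rho(\sigma(\mathcal{A})) = f(\mathcal{A})$, as required.

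To construct $\phi$, I use the countability of $\mathcal{U}$ to fix an enumeration $\mathcal{U}=\{u_1,u_2,\ldots\}$ and define $\phi(u_i)=2^{-i}$. For any finite $\mathcal{A}\subseteq\mathcal{U}$, the value $\sigma(\mathcal{A}) = \sum_{i:\, u_i\in\mathcal{A}} 2^{-i}$ is precisely the base-$2$ expansion whose $i$-th bit indicates membership of $u_i$ in $\mathcal{A}$. Since every real in $[0,1)$ has a unique terminating binary expansion (and each such $\sigma(\mathcal{A})$ terminates because $\mathcal{A}$ is finite), distinct subsets produce distinct sums, giving the injectivity of $\sigma$.

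The main subtle point to check is exactly this injectivity claim: one has to rule out the classical ``dyadic ambiguity'' ($0.0111\ldots = 0.1000\ldots$), which is done by noting that both representations involved here are terminating, so the standard uniqueness of terminating binary expansions applies. The rest is bookkeeping. (If one wished to extend the theorem to multisets with bounded multiplicity $M$, the same idea works with $\phi(u_i) = (M{+}1)^{-i}$, replacing binary by base-$(M{+}1)$ encoding; I would mention this as a remark but not pursue it since the stated theorem concerns sets.)
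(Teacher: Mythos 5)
The paper does not prove this statement at all --- it is quoted verbatim as a theorem of \citet{zaheer2017deep}, so there is no in-paper argument to compare against. Your reconstruction is the standard proof from that reference: the ``if'' direction by commutativity of addition, and the ``only if'' direction by choosing $\phi(u_i)=2^{-i}$ so that $\sigma(\mathcal{A})=\sum_{u_i\in\mathcal{A}}2^{-i}$ is an injective binary encoding of finite subsets, then setting $\rho=f\circ\sigma^{-1}$ on the image. Your handling of the dyadic-ambiguity point is correct: two distinct finitely supported binary expansions cannot collide, since a collision would require one of them to end in infinitely many ones. The only caveat worth recording is that your argument (like the original) genuinely requires $\mathcal{A}$ to be finite --- for countably infinite subsets the base-$2$ encoding is no longer injective (e.g. $\{u_1\}$ and $\{u_2,u_3,\dots\}$ both map to $1/2$), a limitation of the original theorem that has been noted in later work; since the paper only ever applies the result to finite prefixes $\{(x_i,y_i)\}_{i<j}$, this does not affect anything downstream.
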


The aforementioned theorem is stated for elements from a countable universe, with its extension to uncountable sets provided in \cite{zaheer2017deep}, albeit for fixed-length sets. Since functions of the form $\rho \big(\sum_{a_i\in \mathcal{A}} \phi(a_i)\big)$ are uninversal representers of set-based functions we use them as the basis for our architecture. We pick both $\rho$ and $\phi$ as Multilayer Perceptrons (MLPs), and we use these to estimate the parameter $\beta$. The output from these MLPs is then input into another MLP together with the query $x_j$. The final architecture takes the form $\psi\bigg(\rho \big(\frac{1}{j-1}\sum_{i=1}^{j-1} \phi(x_i,y_i)\big), x_j\bigg)$, where 
$(x_i,y_i)$ are input, label pairs seen up to $x_j$. 
To manage sequences of variable length, we incorporate a normalization term $\frac{1}{j-1}$. Consider the noisy label scenario that we studied in Theorem \ref{thm2}, where the optimal model is defined by $x_j^{\top}(\boldsymbol{X}_j^{\top}\boldsymbol{X}_j + \sigma^2\Sigma^{-1} )^{-1}\boldsymbol{X}_{j}^{\top} \boldsymbol{y}_j$. Here, $\rho \big(\frac{1}{j-1}\sum_{i=1}^{j-1} \phi(x_i,y_i)\big)$ aims to output the best estimate for $\beta$, which is $\hat{\beta}(\boldsymbol{X}_j, \boldsymbol{y}_j) = (\boldsymbol{X}_j^{\top}\boldsymbol{X}j + \sigma^2\Sigma^{-1} )^{-1}\boldsymbol{X}_{j}^{\top} \boldsymbol{y}_j$; note how $\hat{\beta}(\boldsymbol{X}_j, \boldsymbol{y}_j)$ is permutation-invariant. As per \citep{zaheer2017deep}, sufficiently expressive $\rho$ and $\phi$ should be capable of expressing $\hat{\beta}(\boldsymbol{X}_j, \boldsymbol{y}_j)$. The final MLP, $\psi$, must approximate a linear map. Next, we delve into the distribution shifts we consider and their underlying rationale.

\paragraph{Distribution shifts for ICL.} 

In both regression and classification problems, the concept of covariate shift \citep{shimodaira2000improving} is well-understood. Covariate shift refers to the situation where the distribution of the input features, denoted as $\mathbb{P}_x$, changes between training and testing phases, but the conditional distribution of the target variable given the features remains invariant. This idea can be applied to the prompts $p$. When the distribution over prompts changes, but the conditional distribution of the target variable (or response) given the prompt remains invariant, this is referred to as ``covariate shift over prompts''. This is a particularly important setting to test, as it helps us understand the model's ability to learn from novel types of prompts or demonstrations at test time. 

Consider two examples that leverage  equation \eqref{eqn:lin_prompt} as the underlying data generation process. Suppose at train time, we generate prompt sequences with inputs $x_i$'s that are mostly positive and then test on prompts comprised of negative inputs. If between train and test we do not alter the label generation process, then this setting qualifies as covariate shift over prompts. On the other hand, consider the setting, where the only difference from train to test is that during label generation at test time is noisy. In this case, the prompt distribution changes but the conditional distribution of the target conditional on the prompt also changes ($\mathbb{E}[y|p]$ at train time is the OLS solution and at test time it is the ridge regression solution). As a result, this type of shift does not qualify as covariate shift over prompts. We want to remark that the difference between two models that perfectly minimize the expected loss in equation \eqref{eqn2} is not apparent under all types of covariate shifts but those that put much more weight on input sequences that are very low probability at train time. This is one aspect in which our choice of distribution shifts differs from \cite{garg2022can}.

\section{Experiments}

In this section, we experiment with the set-based MLPs detailed earlier and transformers from \cite{garg2022can}. We generate data in line with the equation \eqref{eqn:lin_prompt}.  The inputs $x_i's$ at train time are sampled from $\mathcal{N}(0, I_d)$,  where $I_d$ is the $d$ dimensional identity matrix, and at test time they are sampled from  $\mathcal{N}(\mu, I)$. In one case, we set $\mu = 2\cdot \boldsymbol{1}$ and refer to it as a mild distribution shift, and in another case we set $\mu = 4\cdot \boldsymbol{1}$ as severe distribution shift, where $\boldsymbol{1}$ is a $d$ dimensional vector of all ones. The results are presented for $d=10$. The covariance of $\beta$, i.e., $\Sigma$ is identity. We present results for both noiseless labels and noisy labels with $\sigma^2=1$.  For the set-based MLPs, which we refer to as MLP-set, we compare the performance of MLP-set under varying depths, $\{4,5,10,17,26\}$ (indexed from $0$ to $4$ in the increasing order of depth). The width was same for all the layers at $500$. We trained the MLP-set model with the Adam optimizer and a learning rate of $0.001$ except for the case of depth $26$, where we had to lower the learning rate to $0.0001$ to enable learning. We used ReLU activations and batch norm between any two hidden layers. For training the transformer model, we adopt the same architecture used in \citep{garg2022can}, which belongs to the GPT-2 family, and we include performances at two depths - $12$ (Transformer $1$) and $16$ (Transformer $2$).

\begin{figure}[h!]
\begin{minipage}{0.32\textwidth}
    \centering
    \includegraphics[width=2in]{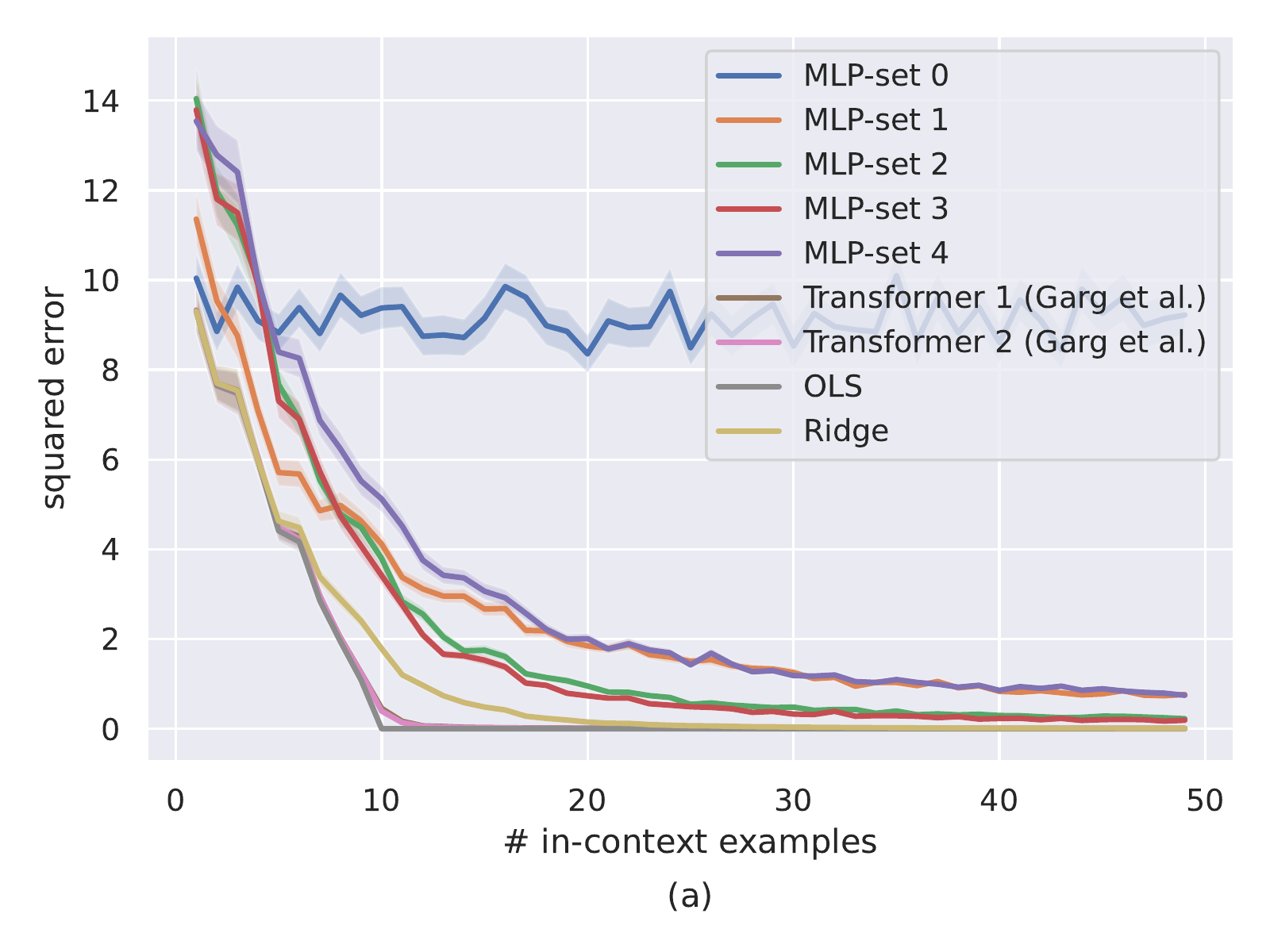}
\end{minipage}
\begin{minipage}{0.32\textwidth}
    \centering
    \includegraphics[width=2in]{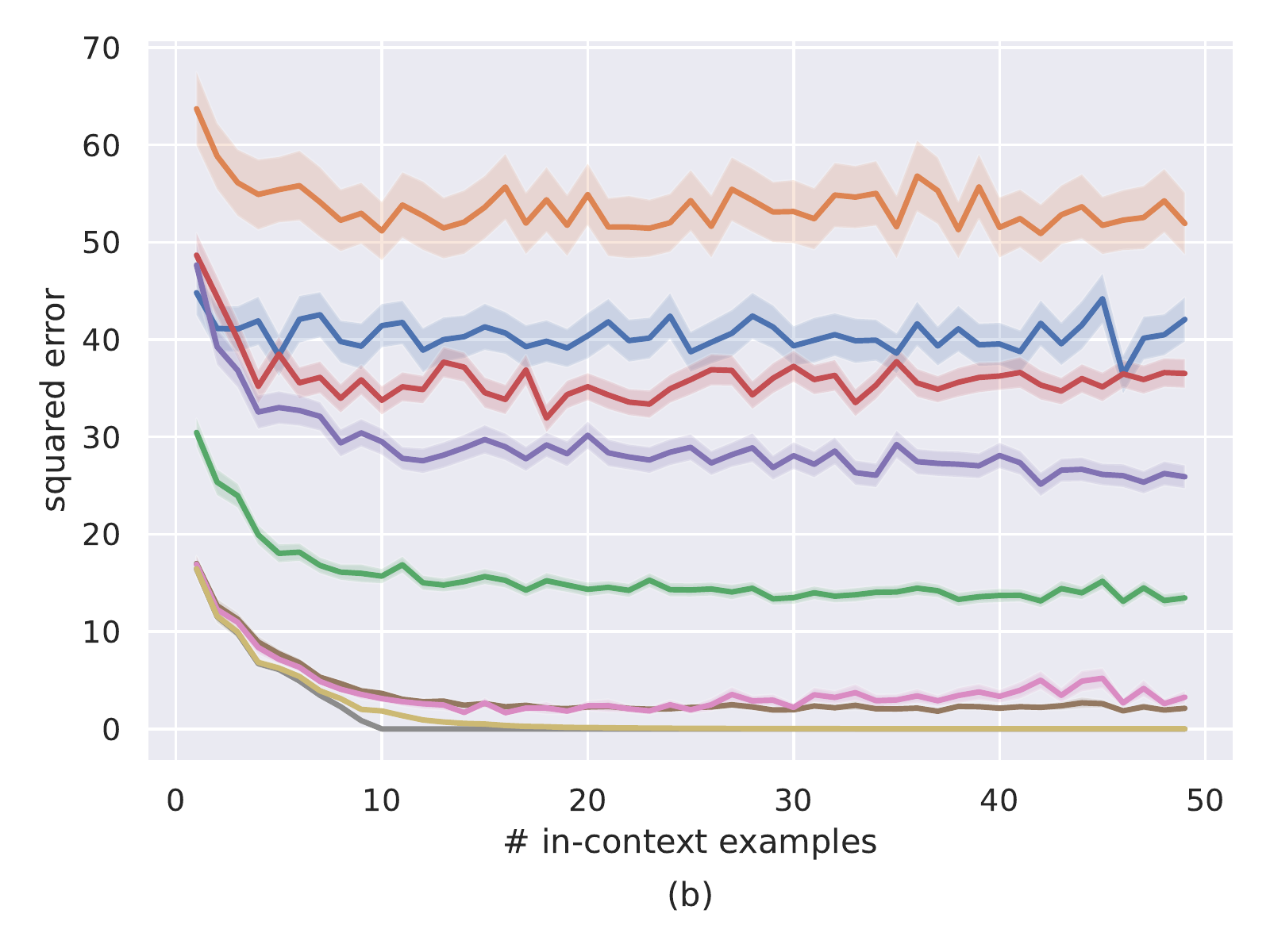}
\end{minipage}
\begin{minipage}{0.32\textwidth}
    \centering
    \includegraphics[width=2in]{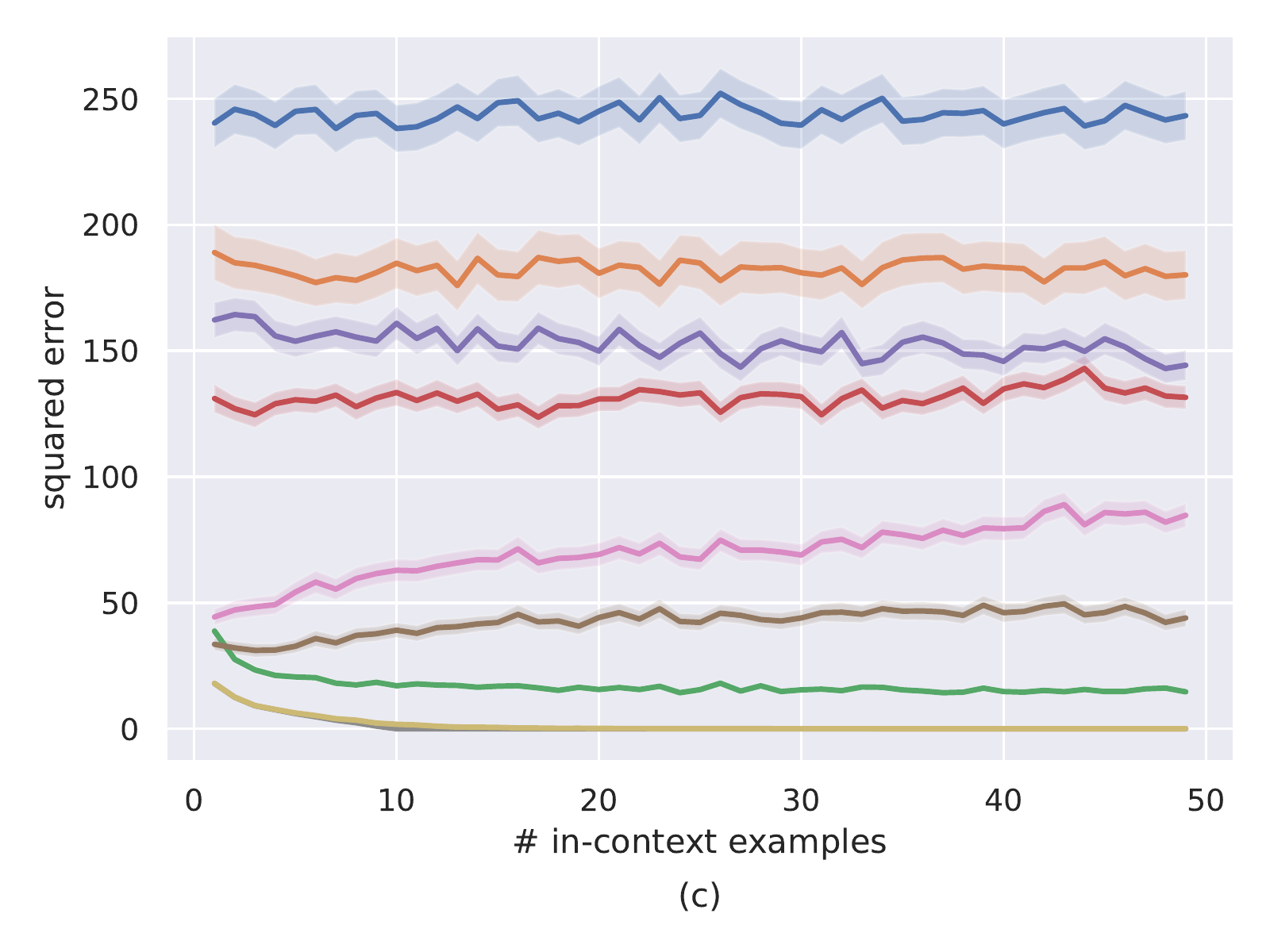}
\end{minipage}
\caption{Comparison of MLP-set and transformers for noiseless setting, i.e., $\sigma=0$. a) ID-ICL ($\mu=0$), b) OOD-ICL (Mild distribution shift with $\mu = 2 \cdot \boldsymbol{1}$), c) OOD-ICL (Severe distribution shift with $\mu = 4\cdot \boldsymbol{1}$).}
\label{figure2}
\end{figure}

\begin{figure}[h!]
\begin{minipage}{0.32\textwidth}
    \centering
    \includegraphics[width=2in]{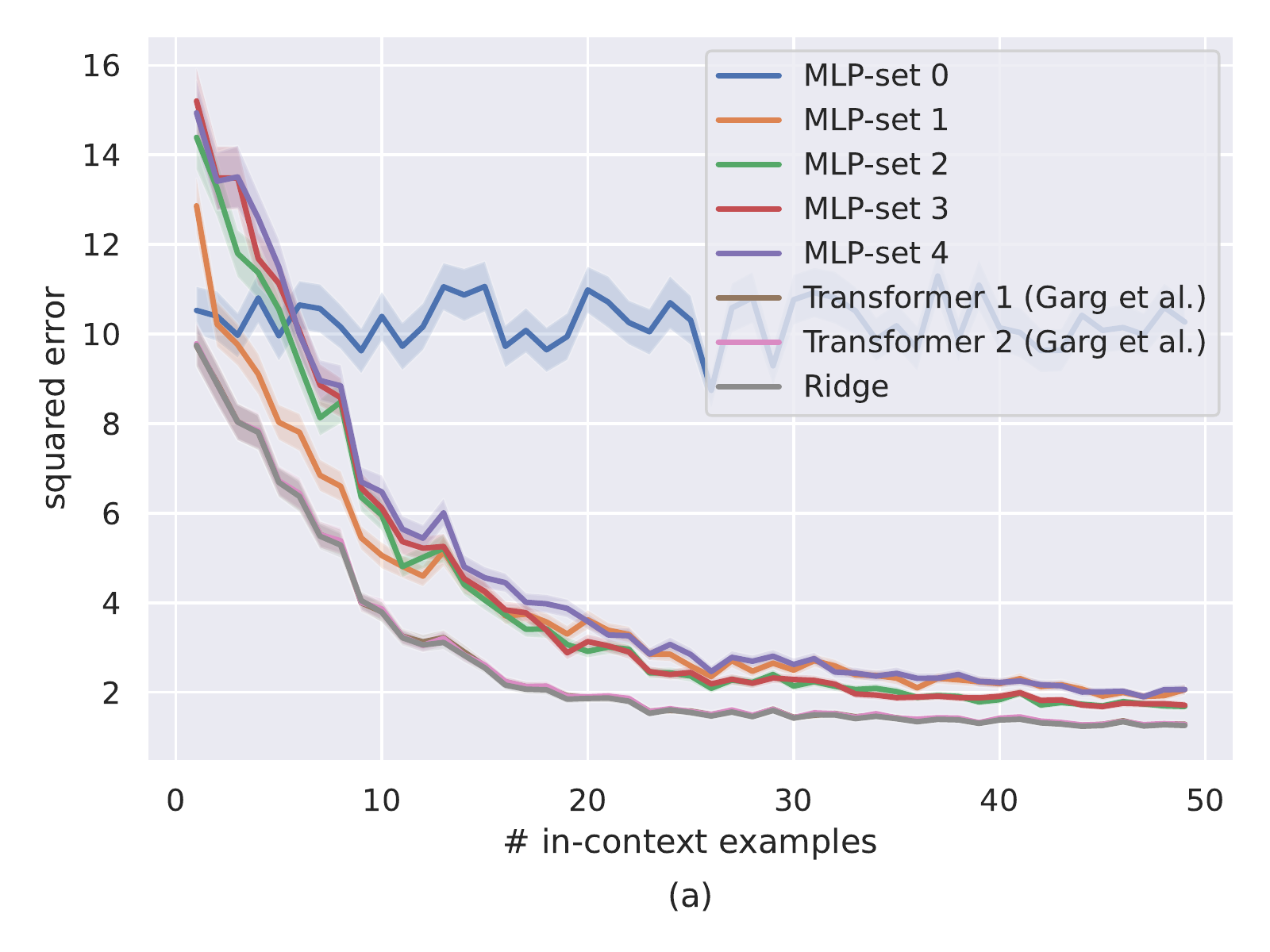}
\end{minipage}
\begin{minipage}{0.32\textwidth}
    \centering
    \includegraphics[width=2in]{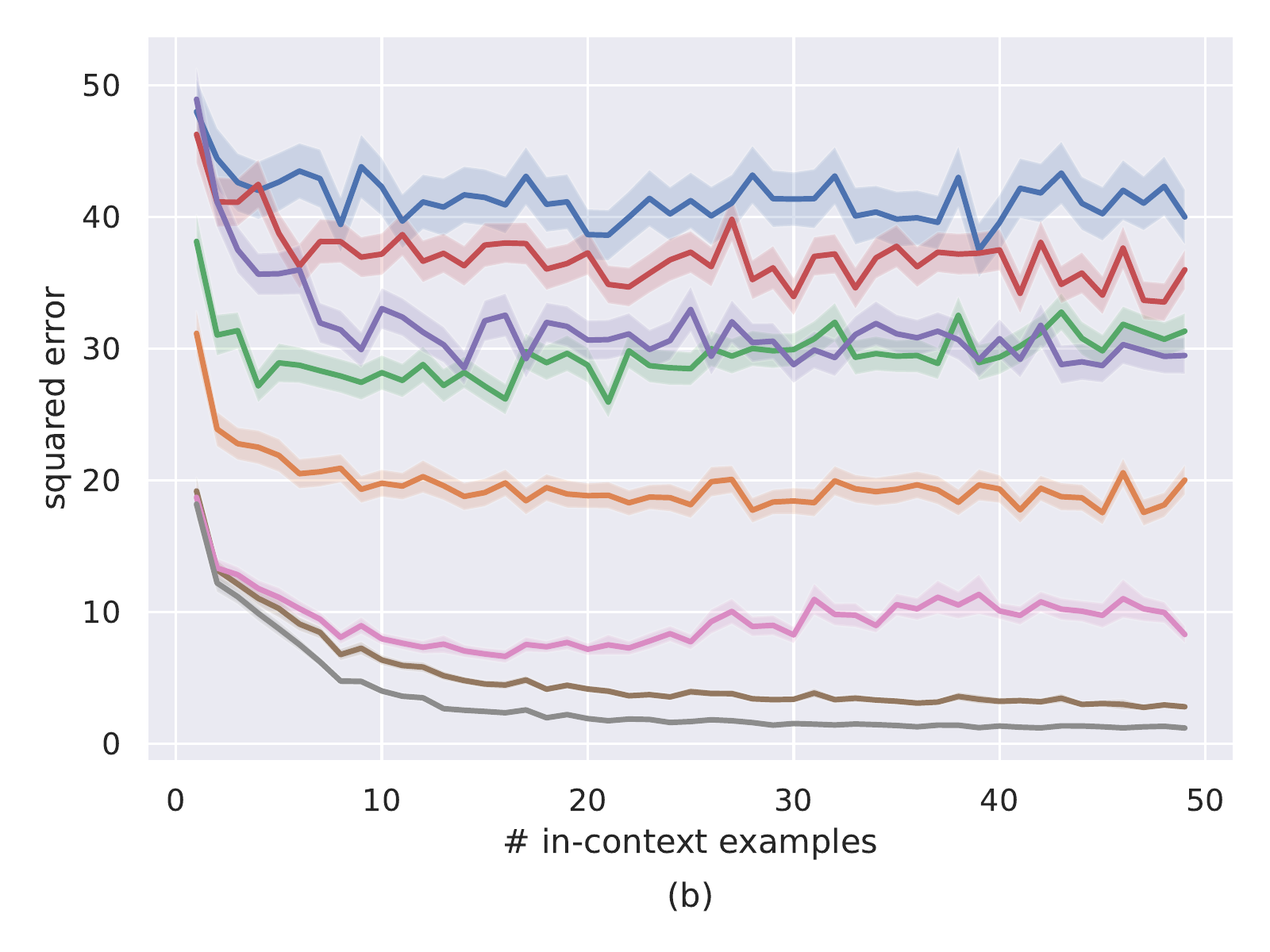}
\end{minipage}
\begin{minipage}{0.32\textwidth}
    \centering
    \includegraphics[width=2in]{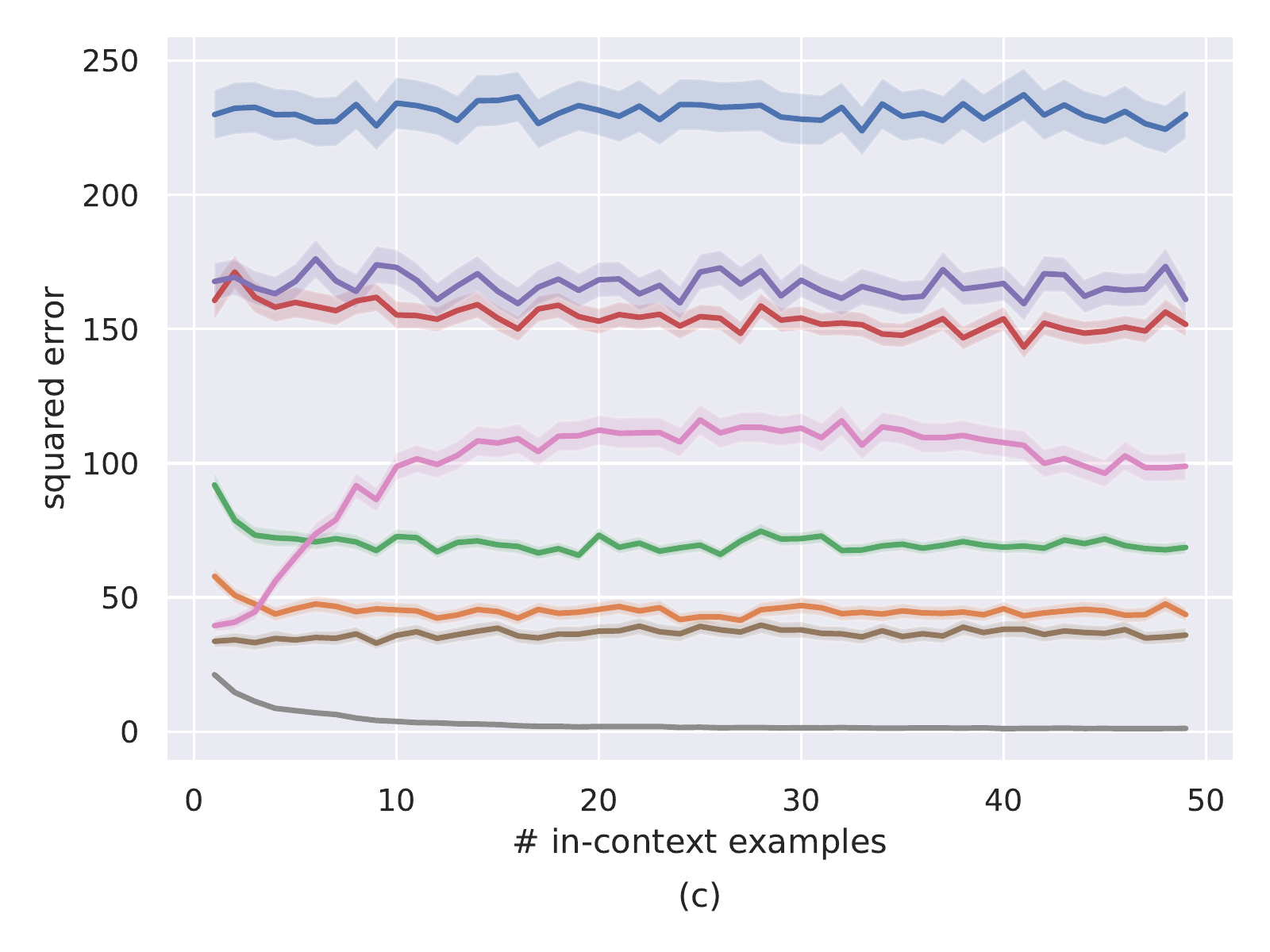}
\end{minipage}
\caption{Comparison of MLP-set and transformers for noisy setting, i.e., $\sigma=1$. a) ID-ICL ($\mu=0$), b) OOD-ICL (Mild distribution shift with $\mu = 2 \cdot \boldsymbol{1}$), c) OOD-ICL (Severe distribution shift with $\mu = 4\cdot \boldsymbol{1}$).}
\label{figure3}
\end{figure}

With this experimental setup we ask these key questions: existing works studying this ICL framework from \citep{garg2022can} focused on transformers exhibiting this capabiltiy. Can this ability exist in other models such as the set-based MLPs?  How do the two architectures differ under distribution shifts?  In Figure \ref{figure2}, \ref{figure3}, we compare the two architectures for the noiseless and noisy setting respectively. We describe our key findings below

\begin{itemize}
\item We find that set-based MLPs exhibit ID-ICL capabilities but do not match the performance of transformers; see Figure \ref{figure2}a, \ref{figure3}a. This is inspite of choosing an architecture that is well suited for the task.
\item Under mild distribution shifts; see Figure \ref{figure2}b, \ref{figure3}b, transformers exhibit a more graceful degradation as opposed set-based MLPs that become more erratic.
\item Under more severe distribution shifts; see Figure \ref{figure2}c, \ref{figure3}c, both the transformers and the set-based MLPs do not exhibit OOD-ICL abilities. 
\item Finally, the ranking of ID-ICL performance of either the set-based MLPs or the transformers is not predictive of their OOD-ICL abilities. 
\end{itemize}

The code for these experiments can be found at \url{https://github.com/facebookresearch/iclmlp}.

\section{Discussion}

This research reveals that transformers outperform natural baselines in approximating OLS and ridge regression algorithms under mild distribution shifts. The question remains, why are transformers superior? Further investigation is required to theorize why transformers when optimized with familiar optimizers like stochastic gradient descent (SGD), can achieve better approximations of algorithms than set-based MLPs. Additionally, it's crucial to explore if these comparisons hold up for a broader set of algorithms (beyond OLS), architectures (beyond set-based MLPs), and understand why. Some important steps towards these inquiries have been made by \citet{liu2022transformers}.


\bibliographystyle{apalike}

\bibliography{ICL-DS}

\newpage
\appendix
\onecolumn

\section{Appendix}

\begin{namedlemma}{[Restatement of Lemma \ref{lemma1}]}
\label{lemma1_app}
If $\ell$ is the square loss, then the solution to equation \eqref{eqn2} satisfies, $M^{*}(p_j) = \mathbb{E}[y_j|p_j], \text{almost everywhere in } \mathcal{P}_j$, $\forall j \in \{1, \cdots, k\}$. 
\end{namedlemma}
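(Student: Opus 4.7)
The plan is to use the standard bias-variance decomposition for square loss, exploiting the fact that the risk in \eqref{eqn2} decomposes additively across the prefix index $j$. First, I would observe that
\[
R(M) = \sum_{j=1}^{k} \mathbb{E}\bigl[(M(p_j) - y_j)^2\bigr],
\]
and that the predictions $M(p_j)$ for different $j$ are functionally independent (one is free to choose $M(p_j)$ as any measurable function of $p_j$ alone for each $j$). Hence minimizing $R(M)$ reduces to minimizing each term $R_j(M) := \mathbb{E}[(M(p_j) - y_j)^2]$ separately, and it suffices to establish the claim for an arbitrary fixed $j$.

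Next, I would apply the tower property to write
\[
R_j(M) = \mathbb{E}\bigl[\,\mathbb{E}[(M(p_j) - y_j)^2 \mid p_j]\,\bigr],
\]
which is legitimate under the stated regularity assumptions ($|\mathbb{E}[y_j|p_j]|<\infty$ and $\mathsf{Var}[y_j|p_j]<\infty$). For every fixed value of $p_j$, the quantity $M(p_j)$ is a deterministic scalar $c$, and the classical bias-variance identity gives
\[
\mathbb{E}[(c - y_j)^2 \mid p_j] = \bigl(c - \mathbb{E}[y_j \mid p_j]\bigr)^2 + \mathsf{Var}[y_j \mid p_j].
\]
The variance term is independent of $c$, so the pointwise minimizer is unique and equals $c = \mathbb{E}[y_j \mid p_j]$.

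Finally, I would argue that choosing $M^{*}(p_j) = \mathbb{E}[y_j \mid p_j]$ pointwise in $p_j$ yields a measurable function (a version of the conditional expectation) that attains the minimum inside the outer expectation, and hence achieves the infimum of $R_j$. Any other minimizer must agree with $\mathbb{E}[y_j \mid p_j]$ on a set of full $\mathbb{P}_{p_j}$-measure, since otherwise the outer expectation would pick up a strictly positive contribution from the squared bias term; together with absolute continuity of the law of $p_j$ with respect to Lebesgue measure, this upgrades the conclusion to ``almost everywhere in $\mathcal{P}_j$''. Summing over $j$ gives the stated result.

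There is no significant obstacle here; the only mild subtlety is ensuring measurability of the pointwise minimizer and properly invoking the regularity hypotheses so that the conditional expectation and variance are well-defined objects, which the statement of the lemma already guarantees.
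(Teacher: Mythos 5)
Your proposal is correct and follows essentially the same route as the paper's proof: decompose the risk across the prefix index $j$, apply the bias--variance identity conditional on $p_j$, and conclude that any minimizer must agree with $\mathbb{E}[y_j \mid p_j]$ except on a set of measure zero. The only cosmetic difference is that you minimize pointwise in $p_j$ before taking the outer expectation, whereas the paper expands the square inside the expectation and then invokes the zero-expectation-of-a-nonnegative-variable argument; these are equivalent.
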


\begin{proof}
We write $$R(M) = \sum_{j=1}^{k}R_j(M),$$ where $R_j(M) = \mathbb{E}\bigg[\ell\big(M(p_{j}), y_{j}\big)\bigg] $.  We simplify $R_j(M)$ below 

\begin{equation}
    \begin{split}  
    & R_j(M)   \\
        &=  \mathbb{E}\bigg[\ell\big(M(p_{j}), y_{j}\big)\bigg] = \mathbb{E}_{p_j} \mathbb{E}_{y_j|p_j}\bigg[\big(M(p_j) - y_j\big)^2 \bigg] \\
& = \mathbb{E}_{p_j} \mathbb{E}_{y_j|p_j}\bigg[\big(M(p_j) - \mathbb{E}[y_j|p_j] + \mathbb{E}[y_j|p_j] - y_j\big)^2 \bigg] \\
& = \mathbb{E}_{p_j} \mathbb{E}_{y_j|p_j}\bigg[\big(M(p_j) - \mathbb{E}[y_j|p_j]\big)^2\bigg] + \mathbb{E}_{p_j} \mathbb{E}_{y_j|p_j}\bigg[\big(y_j - \mathbb{E}[y_j|p_j]\big)^2\bigg] + \\ 
&2 \mathbb{E}_{p_j} \mathbb{E}_{y_j|p_j}\bigg[ \big(M(p_j) - \mathbb{E}[y_j|p_j]\big)\big(y_j - \mathbb{E}[y_j|p_j]\big) \bigg]\\
&= \mathbb{E}_{p_j} \bigg[\big(M(p_j) - \mathbb{E}[y_j|p_j]\big)^2\bigg] + \mathbb{E}_{p_j} \big[\mathsf{Var}[y_j|p_j]\big]
\label{eqn1:lemma1_proof}
    \end{split}     
\end{equation}

Observe that $R_j(M) \geq \mathbb{E}_{p_j} \big[\mathsf{Var}[y_j|p_j]\big] $ and thus  $R(M) \geq \sum_{j=1}^{k}\mathbb{E}_{p_j} \big[\mathsf{Var}[y_j|p_j]\big]$. If $M^{*}$ is a minimizer of $R(M)$, then it also has to minimize $R_j(M)$. If that were not the case, then $M^{*}$ could be strictly improved by replacing $M^{*}$ for the $j^{th}$ query with the better model, thus leading to a contradiction. 
Consider the model $\tilde{M}(p_j) = \mathbb{E}[y_j|p_j]$ for all $p_j \in \mathcal{P}_j, \forall j \in \{1, \cdots, k\}$.  This model $\tilde{M}$ minimizes $R(M)$ and each $R_j(\tilde{M})$. Observe that $R_j(\tilde{M}) = \mathbb{E}_{p_j} \big[\mathsf{Var}[y_j|p_j]\big]$.   Therefore, for any minima $M^{*}$, $R_j(M^{*}) = \mathbb{E}_{p_j} \big[\mathsf{Var}[y_j|p_j]\big]$. From equation \eqref{eqn1:lemma1_proof}, we obtain that $\mathbb{E}_{p_j} \bigg[\big(M^{*}(p_j) - \mathbb{E}[y_j|p_j]\big)^2\bigg]=0$. From Theorem 1.6.6 in \cite{ash2000probability}, it follows that $M^{*}(p_j) = \mathbb{E}[y_j|p_j]$ almost everywhere in $\mathcal{P}_j$. 
\end{proof}

\begin{namedtheorem}{[Restatement of Theorem \ref{thm1}.]}
If $\ell$ is the square loss and prompt generation follows equation \eqref{eqn:lin_prompt}, then the optimal model from equation \eqref{eqn2} satisfies, 
$$M^{*}(p_j) = x_j^{\top}(\boldsymbol{X}_j^{\top}\boldsymbol{X}_j + \sigma^2\Sigma^{-1} )^{-1}\boldsymbol{X}_{j}^{\top} \boldsymbol{y}_j$$
almost everywhere in $\mathcal{P}_j, \forall j \in \{1, \cdots, k\}$. 
\end{namedtheorem}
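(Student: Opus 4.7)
The plan is to invoke Lemma \ref{lemma1} to reduce the problem to computing $\mathbb{E}[y_j \mid p_j]$, and then to identify this conditional expectation with the posterior mean of a standard Bayesian linear regression with Gaussian prior on $\beta$ and Gaussian noise.

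First I would write $y_j = \beta^\top x_j + \varepsilon_j$ using \eqref{eqn:lin_prompt}. Since $\varepsilon_j$ is independent of $(\beta, \boldsymbol{X}_j, \boldsymbol{y}_j, x_j)$ and has mean zero, and since $x_j$ is part of the prefix $p_j$, the tower property gives
\begin{equation*}
\mathbb{E}[y_j \mid p_j] \;=\; x_j^\top \, \mathbb{E}[\beta \mid p_j] \;=\; x_j^\top \, \mathbb{E}[\beta \mid \boldsymbol{X}_j, \boldsymbol{y}_j],
\end{equation*}
where the last equality uses that $x_j$ is drawn independently of $\beta$ and of $(\boldsymbol{X}_j, \boldsymbol{y}_j)$ (for $j=1$ both $\boldsymbol{X}_1$ and $\boldsymbol{y}_1$ are set to zero, so the expression reduces to the unconditional prior mean $x_1^\top \mathbb{E}[\beta] = 0$, which matches the formula since $\boldsymbol{y}_1 = 0$).

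Next I would compute the posterior mean of $\beta$. Conditional on $\boldsymbol{X}_j$, the vector $\boldsymbol{y}_j$ and $\beta$ are jointly Gaussian: the prior is $\beta \sim \mathcal{N}(0, \Sigma)$ and the likelihood is $\boldsymbol{y}_j \mid \beta, \boldsymbol{X}_j \sim \mathcal{N}(\boldsymbol{X}_j \beta, \sigma^2 I)$. Completing the square in the joint log-density,
\begin{equation*}
-\tfrac{1}{2\sigma^2}\|\boldsymbol{y}_j - \boldsymbol{X}_j \beta\|^2 \;-\; \tfrac{1}{2}\beta^\top \Sigma^{-1}\beta,
\end{equation*}
and collecting quadratic and linear terms in $\beta$, the posterior is Gaussian with covariance $\bigl(\boldsymbol{X}_j^\top \boldsymbol{X}_j/\sigma^2 + \Sigma^{-1}\bigr)^{-1}$ and mean
\begin{equation*}
\mathbb{E}[\beta \mid \boldsymbol{X}_j, \boldsymbol{y}_j] \;=\; \bigl(\boldsymbol{X}_j^\top \boldsymbol{X}_j + \sigma^2 \Sigma^{-1}\bigr)^{-1} \boldsymbol{X}_j^\top \boldsymbol{y}_j.
\end{equation*}
Substituting into the previous display yields the claimed formula. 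The almost-everywhere qualifier is inherited directly from Lemma \ref{lemma1}.

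I do not expect a serious obstacle here: Lemma \ref{lemma1} does all of the heavy lifting on the optimization side, and the remainder is the standard conjugate-Gaussian calculation. The only item requiring any care is the justification that conditioning on $p_j$ is equivalent to conditioning on $(\boldsymbol{X}_j, \boldsymbol{y}_j, x_j)$ and then that $x_j$ drops out of the posterior over $\beta$ by independence; and the edge case $j=1$, which I would handle by the convention $\boldsymbol{X}_1 = 0$, $\boldsymbol{y}_1 = 0$ introduced in the text so that the formula evaluates to $0 = \mathbb{E}[y_1]$.
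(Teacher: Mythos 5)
Your proposal is correct and follows essentially the same route as the paper: invoke Lemma \ref{lemma1} to reduce to computing $\mathbb{E}[y_j\mid p_j]$, then carry out the standard conjugate-Gaussian posterior calculation for $\beta$ and read off the posterior mean $(\boldsymbol{X}_j^{\top}\boldsymbol{X}_j+\sigma^2\Sigma^{-1})^{-1}\boldsymbol{X}_j^{\top}\boldsymbol{y}_j$. Your added care about why $x_j$ drops out of the posterior and the $j=1$ edge case is a minor refinement of the paper's argument, not a different approach.
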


\begin{proof}
    From Lemma \ref{lemma1},  we know that $M^{*}(p_j) = \mathbb{E}[y_j|p_j]$ almost everywhere in $\mathcal{P}_j$. 
    We now simplify $\mathbb{E}[y_j|p_j]$ for the data generation provided in equation \eqref{eqn:lin_prompt}. We follow standard steps of computing the posterior in Bayesian linear regression to obtain the posterior of $\beta$ conditioned on prefix $p_j$

    \begin{equation}
    \begin{split}
       &\log\big(p(\beta | p_j)\big)= \log p\big(\beta | \boldsymbol{X}_j,\boldsymbol{y}_j, x_j \big) = \log p \big(\beta | \boldsymbol{X}_j,\boldsymbol{y}_j\big) \\&
       = \log \big(p\big(\boldsymbol{X}_j,\boldsymbol{y}_j | \beta\big)\big) + \log(p(\beta)) + c \\ 
       &  = -\frac{1}{2\sigma^2}\|\boldsymbol{X}_j\beta - \boldsymbol{y}_j \|^2 -\frac{1}{2}\beta^{\top}\Sigma^{-1}\beta + c \\
       & =  -\frac{1}{2} (\beta-\mu)^{\top}\tilde{\Sigma}^{-1} (\beta-\mu) + c
    \end{split}
    \end{equation}
    where $\tilde{\mu} = \tilde{\Sigma} \boldsymbol{X}_{j}^{\top} \boldsymbol{y}_j$ and $\tilde{\Sigma}=(\boldsymbol{X}_j^{\top}\boldsymbol{X}_j + \sigma^2\Sigma^{-1})^{-1}$. Therefore, $\beta$ conditioned on  $p_j$ is a Gaussian distribution with mean $\tilde{\mu} $ and covariance $\tilde{\Sigma}$. Recall 
    $$y_j = \beta^{\top}x_j + \varepsilon_j$$
    From the linearity of expectation and the expression above for the posterior, it follows 
    $$\mathbb{E}[y_j|p_j] = \mathbb{E}[y_j |\boldsymbol{X}_j,\boldsymbol{y}_j, x_j]  = \mathbb{E}[\beta^{\top}x_j | \boldsymbol{X}_j,\boldsymbol{y}_j, x_j] = \tilde{\mu}^{\top} x_j$$
This completes the proof.     
\end{proof}

\begin{namedtheorem}{[Restatement of Theorem \ref{thm2}]}
 If $\ell$ is the square loss and prompt generation follows equation \eqref{eqn:lin_prompt} with $\Sigma$ as identity,  then  in the limit of $\sigma \rightarrow 0 $ the optimal model from equation \eqref{eqn2} satisfies
$$M^{*}(p_j) = x_j^{\top}\boldsymbol{X}_{j}^{+}\boldsymbol{y}_j$$
almost everywhere in $\mathcal{P}_j, \forall j \in \{1, \cdots, k\}$, where $\boldsymbol{X}_{j}^{+}$ is the Moore-Penrose pseudo-inverse of $\boldsymbol{X}_j$.
\end{namedtheorem}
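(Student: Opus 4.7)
The plan is to deduce the statement directly from Theorem \ref{thm1} by taking a pointwise limit in the noise level. Setting $\Sigma = I$ in the formula from Theorem \ref{thm1} gives, for every $\sigma > 0$,
$$M_{\sigma}^{*}(p_j) = x_j^{\top}(\boldsymbol{X}_j^{\top}\boldsymbol{X}_j + \sigma^{2} I)^{-1}\boldsymbol{X}_{j}^{\top} \boldsymbol{y}_j,$$
almost everywhere in $\mathcal{P}_j$. The task therefore reduces to showing that the matrix $(\boldsymbol{X}_j^{\top}\boldsymbol{X}_j + \sigma^{2} I)^{-1}\boldsymbol{X}_j^{\top}$ converges entrywise to the Moore--Penrose pseudo-inverse $\boldsymbol{X}_j^{+}$ as $\sigma \to 0^{+}$, which is a standard identity that I would verify through the singular value decomposition.

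Concretely, writing $\boldsymbol{X}_j = U \Lambda V^{\top}$ with $U, V$ orthogonal and $\Lambda$ the (possibly rectangular) diagonal matrix of singular values $\lambda_1, \dots, \lambda_{\min(j-1,d)}$, a short calculation gives
$$(\boldsymbol{X}_j^{\top}\boldsymbol{X}_j + \sigma^{2} I)^{-1}\boldsymbol{X}_j^{\top} = V D_{\sigma} U^{\top},$$
where $D_{\sigma}$ is diagonal with entries $\lambda_i/(\lambda_i^{2} + \sigma^{2})$. For each $\lambda_i > 0$, this entry tends to $1/\lambda_i$; for each $\lambda_i = 0$, it is identically zero for all $\sigma > 0$. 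These are precisely the diagonal entries of $\Lambda^{+}$, so $V D_{\sigma} U^{\top} \to V \Lambda^{+} U^{\top} = \boldsymbol{X}_j^{+}$, which yields the stated formula. The ``almost everywhere'' qualifier is inherited unchanged from Theorem \ref{thm1}.

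There is no real obstacle here, only two bookkeeping points that I would articulate carefully. First, the interpretation of ``the optimal model in the limit $\sigma \to 0$'' should be spelled out: the Bayes-optimal predictor is constructed for each $\sigma > 0$, since the posterior computation behind Theorem \ref{thm1} degenerates at $\sigma = 0$, and the claim is about the pointwise limit of this family of predictors. Second, the rank of $\boldsymbol{X}_j$ varies with $j$: when $j - 1 \geq d$, absolute continuity of $\mathbb{P}_x$ makes $\boldsymbol{X}_j$ full column rank almost surely and the limit collapses to the ordinary least squares solution $(\boldsymbol{X}_j^{\top}\boldsymbol{X}_j)^{-1}\boldsymbol{X}_j^{\top}\boldsymbol{y}_j$; when $j - 1 < d$, $\boldsymbol{X}_j$ is wide and rank deficient, and the SVD argument above treats both regimes uniformly, recovering the minimum-norm least-squares solution encoded by $\boldsymbol{X}_j^{+}$. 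That uniform treatment is what justifies phrasing the final answer in terms of the pseudo-inverse rather than an explicit inverse.
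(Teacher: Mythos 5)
Your proposal is correct and follows the same route as the paper: specialize Theorem \ref{thm1} to $\Sigma = I$ and pass to the limit $\sigma \to 0$, identifying $\lim_{\sigma\to 0}(\boldsymbol{X}_j^{\top}\boldsymbol{X}_j+\sigma^2 I)^{-1}\boldsymbol{X}_j^{\top}$ with $\boldsymbol{X}_j^{+}$. The only difference is that you verify this limit identity explicitly via the SVD, whereas the paper simply cites it as the standard limit characterization of the Moore--Penrose pseudo-inverse.
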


\begin{proof}
For clarity, in this case we make the dependence of $M^{*}(p_j)$  on $\sigma$ explicit and instead write it as $M^{*}(p_j,\sigma)$ 
 We calculate the limit of the ridge regression predictor as $\sigma$ goes to zero. We obtain $$\lim_{\sigma\rightarrow 0} M^{*}(p_j,\sigma) = x_j^{\top}\lim_{\sigma \rightarrow 0}(\boldsymbol{X}_j^{\top}\boldsymbol{X}_j + \sigma^2\Sigma^{-1} )^{-1}\boldsymbol{X}_{j}^{\top} \boldsymbol{y}_j = x_j^{\top} \boldsymbol{X}_j^{+} \boldsymbol{y}_j$$ 

 In the simplification above, we used $\Sigma$ is identity and  also used the standard limit definition of Moore-Penrose pseudo-inverse \cite{albert1972regression}. 

\end{proof}

\paragraph{Implications for Theorem \ref{thm2} when $\Sigma$ is not identity}
 Now consider the more general case when $\Sigma$ is not identity. In this case, suppose the inverse of  $\boldsymbol{X}_j^{\top}\boldsymbol{X}_j$ exists, which can happen when the rank of $\boldsymbol{X}_j^{\top}\boldsymbol{X}_j$ is $d$. In this case, $\lim_{\sigma \rightarrow 0}(\boldsymbol{X}_j^{\top}\boldsymbol{X}_j + \sigma^2\Sigma^{-1} )^{-1}\boldsymbol{X}_{j}^{\top}  = \boldsymbol{X}_j^{+}$. To see why this is the case, observe that the map $M^{*}(p_j, \sigma)$ is well defined for all $\sigma$ including that at zero and it is also continuous in $\sigma$. If the inverse of $\boldsymbol{X}_j^{\top}\boldsymbol{X}_j$ does not exist, then the limit may not converge to the  Moore-Penrose pseudo-inverse. Consider the following example.

Let  $\boldsymbol{X}_j = \begin{bmatrix}
1\; 0 \\ 
0\; 0 
\end{bmatrix}$ and $\Sigma^{-1} = \begin{bmatrix}
     a \; b \\ 
     b \; c 
\end{bmatrix}$, where $\Sigma$ is invertible and $c\not=0$. 
\begin{equation}
    \begin{split}
   &    (\boldsymbol{X}_j^{\top}\boldsymbol{X}_j + \sigma^2\Sigma^{-1} )^{-1}\boldsymbol{X}_j^{\top} =  \frac{1}{c + \sigma^2(ac-b^2)}\begin{bmatrix}
  c   \; \;0 \\ 
     - b \; \;0 
\end{bmatrix} \\
   &  \lim_{\sigma \rightarrow 0}  (\boldsymbol{X}_j^{\top}\boldsymbol{X}_j + \sigma^2\Sigma^{-1} )^{-1}\boldsymbol{X}_j^{\top} = \begin{bmatrix}
  1  \;\;  0 \\ 
     - \frac{b}{c}\;\;  0
\end{bmatrix}
    \end{split}
\end{equation}

The $\lim_{\sigma \rightarrow 0}  (\boldsymbol{X}_j^{\top}\boldsymbol{X}_j + \sigma^2\Sigma^{-1} )^{-1}\boldsymbol{X}_j^{\top}  \not= \boldsymbol{X}_j^{+}$.

\begin{namedtheorem}{[Restatement of Theorem \ref{thm3}]}
 Suppose $\ell$ is the square loss, $\beta$'s and $x_i$'s are drawn from an arbitrary distribution with a finite mean and invertible covariance, rest of the prompt generation follows equation \eqref{eqn:lin_prompt}. In this setting, the solution to equation \eqref{eqn2} under Constraint \ref{assm: model} satisfies  $$M^{*}(p_j) = x_j^{\top}(\boldsymbol{X}_j^{\top}\boldsymbol{X}_j + \sigma^2\Sigma^{-1} )^{-1}\boldsymbol{X}_{j}^{\top} \boldsymbol{y}_j$$
almost everywhere in $\mathcal{P}_j$, $\forall j \in \{1, \cdots, k\}$. 
\end{namedtheorem}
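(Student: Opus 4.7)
The plan is to reduce the constrained risk minimization to a pointwise least-squares problem in the free matrix $W = m(\boldsymbol{X}_j)$, solve it via matrix calculus, and then rewrite the answer via a standard push-through identity. Concretely, under Constraint \ref{assm: model}, I would use the tower property to write
\begin{equation*}
R_j(M) = \mathbb{E}_{\boldsymbol{X}_j}\!\Big[\mathbb{E}\big[(y_j - x_j^{\top} m(\boldsymbol{X}_j)\boldsymbol{y}_j)^2 \,\big|\, \boldsymbol{X}_j\big]\Big],
\end{equation*}
and observe that since $m(\cdot)$ is otherwise unrestricted, I can minimize the inner conditional risk pointwise: for each realization $\boldsymbol{X}_j = X$, choose the matrix $W = m(X) \in \mathbb{R}^{d \times (j-1)}$ to minimize $L(W) := \mathbb{E}[(y_j - x_j^{\top} W\boldsymbol{y}_j)^2 \mid \boldsymbol{X}_j = X]$.

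Next I would substitute $y_j = \beta^{\top} x_j + \varepsilon_j$ and $\boldsymbol{y}_j = X\beta + \boldsymbol{\varepsilon}$, and use the mutual independence of $x_j$, $\beta$, $\varepsilon_j$ and $\boldsymbol{\varepsilon}$ along with $\mathbb{E}[\boldsymbol{\varepsilon}]=0$, $\mathbb{E}[\boldsymbol{\varepsilon}\boldsymbol{\varepsilon}^{\top}] = \sigma^2 I$, $\mathbb{E}[\beta\beta^{\top}] = \Sigma$, and $M_x := \mathbb{E}[x_j x_j^{\top}]$. Expanding the square and dropping terms not involving $W$ yields a strictly convex quadratic
\begin{equation*}
L(W) = -2\,\mathrm{tr}(W X \Sigma M_x) + \mathrm{tr}\big(X^{\top} W^{\top} M_x W X \Sigma\big) + \sigma^2\,\mathrm{tr}(M_x W W^{\top}) + \mathrm{const.}
\end{equation*}
Differentiating with respect to $W$ using $\partial_W \mathrm{tr}(AW) = A^{\top}$, $\partial_W \mathrm{tr}(W^{\top} A W B) = AWB + A^{\top} W B^{\top}$ (with $A,B$ symmetric), and $\partial_W \mathrm{tr}(M_x W W^{\top}) = 2 M_x W$, setting the derivative to zero gives $M_x\big[W(X\Sigma X^{\top} + \sigma^2 I) - \Sigma X^{\top}\big] = 0$. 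Invertibility of $M_x$, guaranteed by the invertible-covariance hypothesis on $x_j$ (since $M_x$ is the covariance plus a PSD rank-one term), yields the normal equation $W(X\Sigma X^{\top} + \sigma^2 I) = \Sigma X^{\top}$, with unique solution $W^{*} = \Sigma X^{\top}(X\Sigma X^{\top} + \sigma^2 I)^{-1}$.

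Finally I would invoke the push-through identity $\Sigma X^{\top}(X\Sigma X^{\top} + \sigma^2 I)^{-1} = (X^{\top} X + \sigma^2 \Sigma^{-1})^{-1} X^{\top}$, valid because $\Sigma$ is invertible, to rewrite $W^{*}$ in the stated form; substituting back into $M(p_j) = x_j^{\top} W \boldsymbol{y}_j$ gives the claim for each $X$, hence almost everywhere in $\mathcal{P}_j$. The main obstacles I anticipate are bookkeeping in the matrix differentiation (keeping transposes and symmetrizations correct in the trace derivatives) and justifying why pointwise minimization of the conditional risk is equivalent to minimizing $R_j$ over all measurable $m(\cdot)$. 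A minor subtlety worth a remark is that the clean form of the answer implicitly uses $\mathbb{E}[\beta]=0$; otherwise $\Sigma$ in the regularizer should be interpreted as the second-moment matrix $\mathbb{E}[\beta\beta^{\top}]$, with an additional $\mu_\beta\mu_\beta^{\top}$ contribution appearing through the derivation.
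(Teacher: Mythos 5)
Your proof is correct, but it takes a genuinely different route from the paper. The paper expands the constrained risk $\mathbb{E}\big[\big(m(\boldsymbol{X}_j)\boldsymbol{y}_j-\beta^{\top}x_j\big)^2\big]$ and observes that every term involves the law of $\beta$ only through $\mathbb{E}[\beta\beta^{\top}]$; it then swaps the arbitrary prior for a Gaussian with matching moments and invokes Theorem \ref{thm1}, implicitly using that the Bayes predictor for the Gaussian prior already lies in the constrained class, so the constrained and unconstrained optima coincide there. You instead solve the problem head-on: condition on $\boldsymbol{X}_j=X$, minimize the resulting quadratic in $W=m(X)$ via first-order conditions, and recover the ridge form through the push-through identity $\Sigma X^{\top}(X\Sigma X^{\top}+\sigma^2 I)^{-1}=(X^{\top}X+\sigma^2\Sigma^{-1})^{-1}X^{\top}$. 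Your computation checks out (the normal equation, the role of $M_x=\mathbb{E}[x_jx_j^{\top}]\succ 0$, and the identity are all right), and the two anticipated obstacles are benign: the quadratic is strictly positive definite in $W$ since $M_x\succ 0$ and $X\Sigma X^{\top}+\sigma^2 I\succ 0$, so the minimizer is unique, and pointwise minimization is legitimate because the argmin is an explicit continuous function of $X$, hence measurable. What your route buys is a self-contained argument that never needs the Bayesian posterior computation of Theorem \ref{thm1}, makes explicit exactly where invertibility of $\Sigma$ and of $\mathbb{E}[x_jx_j^{\top}]$ enter, and exhibits the optimal $m(\cdot)$ directly; what the paper's route buys is brevity and the conceptual point that only the first two moments of $\beta$ matter. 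Your closing caveat about $\mathbb{E}[\beta]\neq 0$ is well taken and applies equally to the paper's own proof, which also silently identifies $\Sigma$ with the second-moment matrix when it writes $\mathbb{E}[\beta^{\top}\Theta^{1}\beta]=\sum_{i,j}\Theta^{1}_{i,j}\Sigma_{i,j}$.
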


\begin{proof}
    Recall that $R(M) = \sum_{j}R_j(M)$, where $ R_j(M) = \mathbb{E}\big[(M(p_j)-y_j)^2\big]  $. Let us simplify one of the terms $R_j(M)$.

    \begin{equation}
    \begin{split}
         R_j(M) &=   \mathbb{E}\bigg[(M(p_j)-y_j)^2\bigg]   \\
        & = \mathbb{E}\bigg[(M(p_j)-y_j)^2\bigg] = \mathbb{E}\bigg[(M(p_j)-\beta^{\top}x_j)^2\bigg] + \sigma^2 \\ 
        & = \mathbb{E}\bigg[\big(m(\boldsymbol{X}_j)\boldsymbol{y}_j-\beta^{\top}x_j)^2\bigg] + \sigma^2 \\ 
    \end{split} 
    \label{eqn1: proof_bnormal}
    \end{equation}
Suppose the covariance of $x_j$ is $\Lambda$. We write $\Lambda^{\frac{1}{2}}$ to denote the symmetric positive definite square root of $\Lambda$ (Such a square root always exists, see Theorem 3 in \footnote{\url{https://www.math.drexel.edu/~foucart/TeachingFiles/F12/M504Lect7.pdf}}).  We use this to simplify the above expression in equation \eqref{eqn1: proof_bnormal} as follows
    
    \begin{equation}
    \begin{split}
        R_j(M) &=  \mathbb{E}\bigg[\big(m(\boldsymbol{X}_j)\boldsymbol{y}_j-\beta^{\top}x_j)^2\bigg] + \sigma^2  \\
        & = \mathbb{E}\bigg[\big\|\Lambda^{\frac{1}{2}}\big(m(\boldsymbol{X}_j)\boldsymbol{y}_j-\beta\big)\big\|^2\bigg] + \sigma^2  \\ 
        & =  \mathbb{E}\bigg[\big\|\Lambda^{\frac{1}{2}}\big(m(\boldsymbol{X}_j)\boldsymbol{y}_j)\big\|^2\bigg] + \mathbb{E}\bigg[\big\|\Lambda^{\frac{1}{2}} \beta \big\|^2\bigg]   -2 \mathbb{E}\bigg[\boldsymbol{y}_j^{\top}m(\boldsymbol{X}_j)^{\top}\Lambda\beta\bigg] + \sigma^2
    \end{split} 
    \end{equation}
    Let us simplfify the first and the third term in the above. 
    \begin{equation}
        \begin{split}
            \mathbb{E}\bigg[\big\|\Lambda^{\frac{1}{2}}\big(m(\boldsymbol{X}_j)\boldsymbol{y}_j)\big\|^2\bigg] &=  \mathbb{E}\bigg[\boldsymbol{y}_j^{\top} m(\boldsymbol{X}_j)^{\top}\Lambda m(\boldsymbol{X}_j) \boldsymbol{y}_j \bigg] \\ 
            & = \mathbb{E}\bigg[\beta^{\top}\boldsymbol{X}_j^{\top} m(\boldsymbol{X}_j)^{\top}\Lambda m(\boldsymbol{X}_j) \boldsymbol{X}_j \beta \bigg] +   \sigma^2 \mathbb{E}[\mathsf{Trace}[ m(\boldsymbol{X}_j)^{\top}\Lambda m(\boldsymbol{X}_j) ]]
            \label{eqn2: proof_bnormal}
        \end{split}
    \end{equation}
    In the last simplification above, we use the fact that $\boldsymbol{y}_j = \boldsymbol{X}_j\beta+ \boldsymbol{\varepsilon}_j$, where $\boldsymbol{X}_j\in \mathbb{R}^{(j-1)\times d}$ stacks first $j-1$ $x_i$'s and $\boldsymbol{\varepsilon}_j \in \mathbb{R}^{j-1}$ stacks first $j-1$ $\varepsilon_i$'s, and that each component of noise is independent and zero mean.

    Define $\Theta^1 = \mathbb{E}[\boldsymbol{X}_j^{\top} m(\boldsymbol{X}_j)^{\top}\Lambda f(\boldsymbol{X}_j) \boldsymbol{X}_j]$ and $\Theta^2 = m(\boldsymbol{X}_j)^{\top}\Lambda m(\boldsymbol{X}_j)$. Since $\boldsymbol{X}_j$ is independent of $\beta$ the above expression simplifies to 
    \begin{equation}
    \begin{split}
      &  \mathbb{E}\big[\beta^{\top}\Theta^1 \beta \big] + \sigma^2  \mathsf{Trace}[\Theta^2]  = \sum_{i,j} \Theta^{1}_{i,j}\Sigma_{i,j}  + \sigma^2  \mathsf{Trace}[\Theta^{2}] 
    \end{split}
    \end{equation}
    Now let us consider the third term  in equation \eqref{eqn2: proof_bnormal}.
    \begin{equation}
    \begin{split}
   & \mathbb{E}\bigg[\boldsymbol{y}_j^{\top}m(\boldsymbol{X}_j)^{\top}\Lambda\beta\bigg]   =  \mathbb{E}\bigg[\beta^{\top}\boldsymbol{X}_j^{\top}m(\boldsymbol{X}_j)^{\top}\Lambda\beta\bigg]
    \end{split}
    \end{equation}

    Define $\Gamma = \boldsymbol{X}_j^{\top}m(\boldsymbol{X}_j)^{\top}\Lambda$. Since $\boldsymbol{X}_j$ is independent of $\beta$ the above expression simplifies to 

\begin{equation}
    \mathbb{E}\bigg[\beta^{\top}\Gamma \beta\bigg] = \sum_{i,j} \Gamma_{i,j}\Sigma_{i,j}
\end{equation}

    From the above simplifications it is clear that the loss depends on prior on $\beta$ through its mean and covariance only. Therefore, if we use a Gaussian prior with same mean and covariance we obtain the same loss. As a result, we can assume that prior is Gaussian with same mean and covariance and leverage the previous result, i.e., Theorem \ref{thm1}. This completes the proof.  
\end{proof}

\end{document}